\documentclass{article}
\pdfpagewidth=8.5in
\pdfpageheight=11in

\usepackage{ijcai23}

\usepackage{times}
\usepackage{soul}
\usepackage{url}
\usepackage[hidelinks]{hyperref}
\usepackage[utf8]{inputenc}
\usepackage[small]{caption}
\usepackage{graphicx}
\usepackage{amsmath}
\usepackage{amsthm}
\usepackage{booktabs}
\usepackage[ruled,vlined,linesnumbered]{algorithm2e}

\usepackage[switch]{lineno}


\urlstyle{same}






\pdfinfo{
/TemplateVersion (IJCAI.2023.0)
}

\usepackage{xspace}
\usepackage{color}

\usepackage{amsfonts}
\usepackage{amssymb}
\usepackage{amstext}
\usepackage{mathtools}

\usepackage{dsfont}


\usepackage{todonotes}

\clubpenalty=10000
\widowpenalty=10000
\frenchspacing 

\newtheorem*{theorem*}{Theorem}
\newtheorem{theorem}{Theorem}
\newtheorem{lemma}[theorem]{Lemma}

\newcommand{\oea}{\mbox{${(1 + 1)}$~EA}\xspace}

\newcommand{\NSGA}{NSGA\nobreakdash-II\xspace}

\newcommand{\onemax}{\textsc{OneMax}\xspace}
\newcommand{\LO}{\textsc{Leading\-Ones}\xspace}
\newcommand{\leadingones}{\LO}

\newcommand{\oneminmax}{\textsc{OneMinMax}\xspace}
\newcommand{\cocz}{\textsc{COCZ}\xspace}
\newcommand{\lotz}{\textsc{LOTZ}\xspace}
\newcommand{\ojzj}{\textsc{OneJumpZeroJump}\xspace}

\DeclareMathOperator{\ex}{ex}
\DeclareMathOperator{\total}{total}

\DeclareMathOperator{\init}{init}
\DeclareMathOperator{\elim}{elim}
\DeclareMathOperator{\Temp}{Temp}

\newcommand{\R}{\ensuremath{\mathbb{R}}}

\newcommand{\N}{\ensuremath{\mathbb{N}}} 

\newcommand{\bbone}[1]{\mathds{1}_{#1}}

\newcommand{\calP}{\ensuremath{\mathcal{P}}}

\newcommand{\calO}{\ensuremath{\mathcal{O}}}
\newcommand{\calV}{\ensuremath{\mathcal{V}}} 
\newcommand{\calW}{\ensuremath{\mathcal{W}}}


\let\originalleft\left
\let\originalright\right
\renewcommand{\left}{\mathopen{}\mathclose\bgroup\originalleft}
\renewcommand{\right}{\aftergroup\egroup\originalright}


\pagestyle{plain}
\thispagestyle{plain}

\title{Runtime Analyses of Multi-Objective Evolutionary Algorithms\\ in the Presence of Noise\thanks{Author-generated version.}
}


\author{
Matthieu Dinot$^1$
\and
Benjamin Doerr$^{2}$\and
Ulysse Hennebelle$^{1}$\and
Sebastian Will$^{2}$
\affiliations
$^1$\'Ecole Polytechnique, Institut Polytechnique de Paris, Palaiseau, France\\
$^2$Laboratoire d'Informatique (LIX), CNRS, \'Ecole Polytechnique, Institut Polytechnique de Paris, Palaiseau, France
\emails
matthieu.dinot@polytechnique.edu,
benjamin.doerr@polytechnique.edu,
ulisse.hennebelle@polytechnique.edu,
sebastian.will@polytechnique.edu
}

\begin{document}

\maketitle

\begin{abstract}
In single-objective optimization, it is well known that evolutionary algorithms also without further adjustments can stand a certain amount of noise in the evaluation of the objective function. In contrast, this question is not at all understood for multi-objective optimization.

 In this work, we conduct the first mathematical runtime analysis of a simple multi-objective evolutionary algorithm (MOEA) on a classic benchmark in the presence of noise in the objective function. 
 We prove that when bit-wise prior noise with rate $p \le \alpha/n$, $\alpha$ a suitable constant, is present, the \emph{{simple} evolutionary multi-objective optimizer} (SEMO) without any adjustments to cope with noise finds the Pareto front of the OneMinMax benchmark in time $O(n^2\log n)$, just as in the case without noise. Given that the problem here is to arrive at a population consisting of $n+1$ individuals witnessing the Pareto front, this is a surprisingly strong robustness to noise (comparably simple evolutionary algorithms cannot optimize the single-objective OneMax problem in polynomial time when $p = \omega(\log(n)/n)$). Our proofs suggest that the strong robustness of the MOEA stems from its implicit diversity mechanism designed to enable it to compute a population covering the whole Pareto front. 
 
 Interestingly this result only holds when the objective value of a solution is determined only once and the algorithm from that point on works with this, possibly noisy, objective value. We prove that when all solutions are reevaluated in each iteration, then any noise rate $p = \omega(\log(n)/n^2)$ leads to a super-polynomial runtime. This is very different from single-objective optimization, where it is generally preferred to reevaluate solutions whenever their fitness is important and where examples are known such that not reevaluating solutions can lead to catastrophic performance losses.
\end{abstract}

\section{Introduction}
Many real-world optimization problems consist of multiple, often conflicting objectives. For these, a single optimal solution usually does not exist. Consequently, a common solution concept is to compute a set of solutions which cannot be improved in one objective without worsening in another one (Pareto optima) and then let a decision maker select one of these. 

Due to their population-based nature, evolutionary algorithms (EAs) are an obvious heuristic approach to such problems, and in fact, such multi-objective evolutionary algorithms (MOEAs) have been successfully applied to many multi-objective problems~\cite{CoelloLV07,ZhouQLZSZ11}. 

Evolutionary algorithms are known to be robust against different types of stochastic disturbances such as noise or dynamic changes of the problem instance~\cite{JinB05}. Surprisingly, as regularly pointed out in the literature~\cite{LiefoogheBJT07,Gutjahr12,GutjahrP16}, only very little is known on how MOEAs cope with such stochastic optimization problems. In particular, while it is known that single-objective evolutionary algorithms without any specific adjustments can stand a certain amount of noise in the evaluation of the objective function, we are not aware of any such result in multi-objective optimization.

We approach this question via the methodology of mathematical runtime analysis~\cite{NeumannW10,AugerD11,Jansen13,ZhouYQ19,DoerrN20}. This field, for more than twenty years, has significantly enlarged our understanding of the working principles of all kinds of randomized search heuristics, including both MOEAs~\cite{Brockhoff11bookchapter} and single-objective evolutionary algorithms solving stochastic optimization problems~\cite{NeumannPR20bookchapter}. Despite this large amount of work, there is not a single runtime analysis discussing how a standard MOEA computes or approximates the Pareto front of a multi-objective problem in the presence of noise (and this is what we shall do in the present work). The only paper that conducts a mathematical runtime analysis of a MOEA in a noisy setting analyzes a combination of the adaptive Pareto sampling frameworks with the simple evolutionary multi-objective optimizer (SEMO)~\cite{Gutjahr12}, so this algorithm definitely is not anymore a standard MOEA.

To start closing this gap, we conduct a runtime analysis of a simple MOEA, namely the SEMO, on the classic benchmark problem \oneminmax, in the presence of one-bit prior noise. We prove that when the noise rate is at most $\alpha/n$, where $\alpha$ is a suitable constant, then the population of this MOEA, despite the noise, after an expected number of $O(n^2 \log n)$ iterations witnesses the full Pareto front. This is the same bound on the runtime that is known for the setting without noise~\cite{GielL10,OsunaGNS20}. We note that in comparable single-objective settings, not much more noise can be tolerated. For example, only for $p = O(\log(\log(n))/n)$ it could be shown in~\cite{Dang-NhuDDIN18} that the \oea retains its noise-free $O(n \log n)$ runtime on the \onemax benchmark. Already for $p = \omega(\log(n)/n)$, the runtime is super-polynomial. Considering this and taking into account that the multi-objective \oneminmax problem is naturally harder (we aim at a population containing exactly one solution for each Pareto optimum), our result indicates that MOEAs cope with noise surprisingly well.

However, our work also shows one important difference to the noisy optimization of single-objective problems. Our result above assumes that each solution is evaluated only once, namely when it is generated. This possibly noisy objective value is stored with the solution unless the solution is discarded at some time. This approach is natural in that it avoids costly reevaluations, but it differs from the standards in single-objective evolutionary computation. Being afraid that a faulty fitness value can negatively influence the future optimization process, almost all works there assume that each time a solution competes with others, its fitness is reevaluated. That noisy fitness values without reevaluating solutions can significantly disturb the optimization process was rigorously shown for an ant-colony optimizer~\cite{DoerrHK12ants}.

Given that in single-objective evolutionary computation it is more common to assume that solutions are evaluated anew when they compete with others, we also analyzed the runtime of the SEMO on \oneminmax under this assumption. While this avoids sticking to faulty objective values for a long time, our mathematical runtime analysis shows that this approach can only tolerate much lower levels of noise. We can prove an $O(n^2 \log n)$ runtime only for $p \le \beta/n^2$, $\beta$ a suitable constant, and we prove that for $p = \omega(\log(n)/n^2)$ the algorithms needs super-polynomial time to compute the full Pareto front of the \oneminmax benchmark. So clearly, the reevaluation strategy recommended in single-objective optimization is less suitable in multi-objective optimization (in addition to the significantly higher computational cost of a single iteration). 

Overall, this first runtime analysis work of a standard MOEA in a noisy environment shows that MOEAs without specific adjustments are reasonably robust to noise and that such stochastic processes can be analyzed with mathematical means, but also that insights from the single-objective setting can be fundamentally wrong in multi-objective noisy optimization.

\section{Previous Work}
Given the success of evolutionary algorithms both in multi-objective and in stochastic optimization, there is a large body of literature on both topics. For the general picture, we refer to the surveys~\cite{CoelloLV07,ZhouQLZSZ11} and~\cite{JinB05,BianchiDGG09}.

Much less understood is the intersection of both areas, that is, how MOEAs solve stochastic optimization problems, see~\cite[Section~4]{Gutjahr11trends}. Interestingly, all works described there adapt the MOEA to deal with the stochasticity of the problem. This is justified for problems that have a strong stochasticity known in advance. However, it is known that evolutionary algorithms often can tolerate a certain amount of stochastic disturbance, in particular, noisy function evaluations without any adjustments (and hence, without that the user has to be aware of this noise). In contrast to single-objective optimization, we are not aware of any such result for MOEAs.

This being a theoretical work, we now describe the mathematical runtime analyses closest to our work. The mathematical runtime analysis of MOEAs was started in~\cite{LaumannsTDZ02,Giel03,Thierens03}. These and most subsequent works regarded artificial toy algorithms like the simple evolutionary multi-objective optimizer (SEMO) or the global SEMO (GSEMO). The hope is that results proven on such basic algorithms, or at least the general insights drawn from them, extend to more realistic algorithms. That this hope is not unrealistic can be seen, e.g., from the fact that the runtimes shown for the SEMO  in~\cite{LaumannsTZWD02,GielL10} could much later be proven also for the \NSGA~\cite{ZhengLD22,BianQ22,DoerrQ23LB}, the most prominent MOEA. 

For our work, naturally, the results on the \oneminmax benchmark are most relevant. In~\cite{GielL10}, it was proven that the SEMO finds the Pareto front of this benchmark in an expected number of $O(n^2 \log n)$ iterations. A matching lower bound was proven in~\cite{OsunaGNS20}. The $O(n^2 \log n)$ upper bound also holds for the GSEMO (never formally proven, but easy to see from the proof in~\cite{GielL10}), the hypervolume-based $(\mu+1)$ SIBEA with suitable population size~\cite{NguyenSN15}, and the \NSGA with suitable population size~\cite{ZhengLD22}. For the NSGA-III, a runtime analysis exists only for the $3$-objective version of \oneminmax \cite{WiethegerD23}, but it is obvious that this result immediately extends to an $O(n^2 \log n)$ bound in the case of two objectives.  The main argument in all these analyses (which is not anymore true in the noisy setting) is that Pareto points cannot be lost, that is, once the population of the algorithm contains a solution with a certain Pareto-optimal objective value, it does so forever. 

Due to their randomized nature, it is not surprising that EAs can tolerate a certain amount of stochastic disturbance. In the first runtime analysis of an EA for discrete search spaces in the presence of noise~\cite{Droste04}, Droste investigated how the \oea optimizes the \onemax benchmark in the presence of bit-wise prior noise with noise rate~$p$ (see Section~\ref{subsec:noise} for a precise definition of this noise model). He showed that the runtime remains polynomial (but most likely higher than the well-known $O(n \log n)$ runtime of the noiseless setting) when $p = O(\log(n)/n)$. When $p = \omega(\log(n)/n)$, a super-polynomial runtime results. These bounds have been tightened and extended in the future, in particular, an $O(n \log n)$ runtime bound for noise rate $p = O(\log(\log(n))/n)$ with implicit constant small enough was shown in~\cite{Dang-NhuDDIN18}. The level of noise an algorithm can stand depends strongly on the problem, for example, for the \leadingones benchmark the \oea has a polynomial runtime if and only if the noise rate is at most $p = O(\log(n)/n^2)$~\cite{Sudholt21}. We refer to~\cite{NeumannPR20bookchapter} for more results.

We note that all these works regard the standard version of the EA without any particular adjustments to better cope with noise. It is well-studied that resampling techniques can increase the robustness to noise~\cite{AkimotoMT15,QianYTJYZ18,DoerrS19}, however, this requires the algorithm user to be aware of the presence of noise and have an estimate of its strength. 

\emph{Reevaluating solutions:} When running a standard EA in a noisy environment, there is one important implementation choice, namely whether to store the possibly noisy fitness of a solution or to reevaluate the solution whenever it becomes important in the algorithm. The latter, naturally, is more costly due to the higher number of fitness evaluations, but since most EAs generate more solutions per iteration than what they take into the next generation, this often is only a constant-factor performance loss. On the other hand, sticking to the objective value seen in the first evaluation carries the risk that a single unlucky evaluation has a huge negative impact on the remaining run of the algorithm.

The question which variant to prefer has not been discussed extensively in the literature. However, starting 
with the first runtime analysis of an EA in a noisy environment~\cite{Droste04}, 
almost all runtime analyses of EAs in a noisy environment assume that each solution is reevaluated whenever it plays a role in the algorithm, in particular, whenever solution qualities are compared. 

The only example we are aware of where this is done differently is the analysis of how an ant-colony optimizer (ACO) solves a stochastic shortest path problem in~\cite{SudholtT12}. This work regards a variant of the Max-Min Ant System~\cite{StutzleH00} with best-so-far pheromone update where the best-so-far solution is kept without reevaluation. While this algorithm was provably able to solve some stochastic path problems, it was not able to solve others (except possibly in exponential time). The same algorithm, but with the best-so-far solution being reevaluated whenever it competes with a new solution, was analyzed in~\cite{DoerrHK12ants} and it was shown to be able to solve many problems efficiently which could not be solved by the other variant. 

\emph{Noisy evolutionary multi-objective optimization:} 
While we apparently have a good understanding on how EAs cope with noise and how they can be used to solve multi-objective problems, there has been very little research on how EAs solve multi-objective problems in the presence of noise. The only mathematical runtime analysis in this direction is~\cite{Gutjahr12}. It analyzes how the \emph{adaptive Pareto sampling} framework together with a variant of the SEMO (called with suitable noiseless instances) allows to solve noisy instances. So like all other non-mathematical works on noisy heuristic multi-objective optimization, e.g.,~\cite{Teich01,DingBX06,LiefoogheBJT07,BoonmaS09,FieldsendE15,RakshitK15}, this work does not analyze how a standard MOEA copes with noise, but proposes a specific way how to solve noisy multi-objective optimization problems.

        
\section{Preliminaries: Multi-objective Optimization of the \oneminmax Benchmark in the Presence of Noise}
\subsection{The \oneminmax Benchmark}

In multi-objective optimization over a set $X$ using an evaluation function $g:X\rightarrow \R^d$, we say that a vector $x\in X$ is a Pareto optimal solution if there is no $y\in X$ with $g(x)\prec g(y)$. We denote as $X^*$ the set of Pareto optimal vectors, and define the Pareto front as $g(X^*)$.
Note that we will use the partial order on $\R^d$ where $x\preceq y$ if and only if for all $i\in[1..d]$, we have $x_i\le y_i$. The induced strict ordering will then be defined as $x\prec y \iff (x\preceq y \land x\neq y)$.

The \oneminmax benchmark is a bi-objective optimization problem defined over the decision space $X = \{0, 1\}^n$.
It was introduced in \cite{GielL10} as a bi-objective version of the classic \onemax benchmark. The objective function is defined as
\[
    g(x) = \left(f(x), n - f(x)\right),
\]
where the first objective $f(x) = \onemax(x)$ is the number of ones in $x$. The goal of the algorithms analyzed in this paper is to find a minimal subset of $X$ which has a direct image by $g$ equal to the Pareto front of $X$ for this benchmark. Since any solution to the \oneminmax benchmark is Pareto optimal, the Pareto front of this problem is $X^* =  \{(k, n - k)\mid k\in [0..n]\}$.
    
\subsection{One-bit Noise}
\label{subsec:noise}
In this article, we consider that every evaluation of an objective vector is subject to noise. Let $p \in [0, 1]$ be the noise rate in the whole article. 

We define the noisy vector evaluation function $\tilde{x}$ as the vector $x$ but where a random bit uniformly chosen has been flipped with probability $p$. Each noisy evaluation is independent from one another. This noise model is known as \textit{one-bit noise}.

For any function $\phi : X \rightarrow Y$, we define
\[
    \tilde{\phi}(x) \coloneqq \left\{
    \begin{array}{ll}
        X \rightarrow Y \\
        x \mapsto \phi(\tilde{x})
    \end{array}.
    \right.
\]
It is therefore possible to replace the objective function $g$ by its noisy version $\tilde{g}$ in any evaluation.

\subsection{Modified SEMO Algorithms for Optimizing Noisy Objective Functions}

When running an EA on a noisy problem, one can work with the first noisy fitness value received for a solution or to reevaluate it each time it is relevant. We present two versions of the same algorithm for these two possibilities.
In all our algorithms, populations $\calP_t$ or $\calP'_t$ are multisets. This is necessary even for a SEMO algorithm (which stores at most one solution per objective value, and hence not multiple copies of an individual) since the noise fitness may let copies of an individual appear different due to their noise fitness.

\subsubsection{SEMO Without Reevaluation}
The first algorithm is the original SEMO algorithm, but it evaluates values of an element entering the population only once and save its value as long as this element stays in the population. 

In Algorithm~\ref{alg:SEMO_without_reevaluation}, we store the noisy evaluations of elements of $\calP_t$ in $\calW_t$. The function $\calW_t$ can be seen as a map that associates each element of $\calP_t$ with its noisy value as computed at the time of its addition.

\begin{algorithm}[t]
    \caption{SEMO without reevaluation}
    \label{alg:SEMO_without_reevaluation}
    \DontPrintSemicolon
    \SetKwBlock{Repeat}{repeat}{}

    $x_{\init}$ is uniformly chosen from $\{0, 1\}^n$ \;
    $\calP_0 \gets{\{x_{\init}\}}$ \;
    $\calW_0(x_{\init}) \gets \tilde{g}(x_{\init})$ \;
    \;

    \For{$t = 0$ to $\infty$}{
        Choose $x_t$ uniformly from $\calP_t$ \;
        Sample $x_t'$ from $x_t$ by flipping one uniformly chosen bit in $x_t$\;
        \;
        $w \gets{\tilde{g}(x_t')} $ \;
        \;
        \If{there is no $x\in\calP_t$ such that $w \prec \calW_t(x)$}{
            $\calP_{t + 1} \gets{\{x_t'\}}$ \;
            $\calW_{t + 1}(x_t') \gets{w}$ \;
            \;
            \ForEach{$x\in\calP_t$}{
                \If{$\calW_t(x) \npreceq w$}{
                    $\calP_{t + 1}\gets{\calP_{t + 1}\cup\{x\}}$\;
                    $\calW_{t + 1}(x) \gets{\calW_t(x)}$
                }
            }
        }
        \Else{
            $\calP_{t + 1} \gets{\calP_{t}}$ \;
            $\calW_{t + 1} \gets{\calW_{t}}$ \;
        }
    }
\end{algorithm}

Note that we state this algorithm in a general way to deal with any multi-objective optimization. For \oneminmax, this algorithm could be simplified, since the condition in line \textbf{11} is
always true and the inequality in line \textbf{16} becomes an equality. 

\subsubsection{SEMO With Reevaluation}

We now describe a SEMO algorithm which reevaluates each solution in each iteration. Reevaluations have been recommended in noisy settings to avoid working with the same, faulty fitness value for a long time. For the SEMO, this requires some adjustments since now it does not suffice anymore to integrate the new offspring into the existing population. Due to the changing fitness values of individuals, it may also happen that two old solutions appear equal, so one of the two has to be removed.

\begin{algorithm}[t]
    \caption{SEMO with reevaluation}
    \label{alg:SEMO_with_reevaluation}
    \DontPrintSemicolon
    \SetKwBlock{Repeat}{repeat}{}

    $\calP_0 \gets{\{x_{\init}\}}$, where $x_{\init}$ is uniformly chosen from $\{0, 1\}^n$ \;
    \;

    \For{$t = 0$ to $\infty$}{
        Choose $x_t$ uniformly from $\calP_t$ \;
        Sample $x_t'$ from $x_t$ by flipping one uniformly chosen bit in $x_t$ \;
        \;
        $\calP_t' \gets{\calP_t\cup\{x_t'\}}$ \;

        $\calP_{t+1}\gets{\elim(\calP_t', \tilde{g})}$
    }
\end{algorithm}

This algorithm uses the $\elim$ function defined below. Given a multiset $E\subset X$ and a possibly random evaluation function $h : X \rightarrow Y$
where Y is a partially ordered set, $G\coloneqq \elim(E,h)$ will be a minimal sub-multiset of $E$ that Pareto-dominates $E$. Pareto-dominance meaning
that for every $x\in E$, there exists $y\in G$ with $h(x) \preceq h(y)$, and minimality meaning that for every $x,y\in G$, $h(x)$ and $h(y)$ are not
comparable.

\begin{algorithm}[t]
    \caption{Elimination function $\elim$. Inputs are the initial set $E$ and the evaluation function $h$. Output is $G$, a minimal Pareto-dominant
        sub-multiset of $E$.}
    \label{alg:elim}
    \DontPrintSemicolon
    \SetKwBlock{Repeat}{repeat}{}

    $G \gets{\emptyset}$ \;
    $\Temp \gets{\emptyset}$ \;
    \;

    \ForEach{$x\in E$ selected in a random order}{
        $v_x \gets{h(x)}$\;
        \;
        \If{there is no $v\in \Temp$ such that $v_x \prec v$}{
            \ForEach{$y$ in $G$ with $v_y \preceq v_x$}{
                $G\gets{G \setminus \{y\}}$\;
            }
            $G\gets{G\cup\{x\}}$\;
            $\Temp\gets{\Temp\cup \{v_x\}}$
        }
    }
    \Return{G}
\end{algorithm}
For \oneminmax, the $\elim$ function can be simplified, with the condition in line \textbf{7} being always verified and the inequality in line \textbf{8} being an equality.


\section{Runtime Analysis of the SEMO Without Reevaluation}

In this section, we analyze the expected number of iterations of the main loop needed to reach the Pareto front in the SEMO without reevaluation. 
Our main technical tool is drift analysis, that is, a collection of mathematical tools that allow to translate estimates for the one-step change $X_{t+1} - X_t$ of a random process $(X_t)$ into estimates for hitting times of this process. See~\cite{Lengler20bookchapter} for a recent survey on this method. Unfortunately, for reasons of space, we cannot present the proofs of our results in full detail. They can be found in the long version \cite{DinotDHW23arxiv}.

We define the total time to reach the Pareto front as a stopping time
\[
    T_{\total} \coloneqq \min\{t \mid f(\calP_t) = [0..n]\}
\]
and show that there is a constant $\alpha$ such that for all noise rates $p\le\tfrac{\alpha}{n}$ we have $E[T_{\total}] = O(n^2\log(n))$.

\subsection{Observations on the SEMO Without Reevaluation}
The function $\calW_t$ is the map from the elements of $\calP_t$ to their noisy evaluated value at the moment they entered the population. We call $\calV_t$ the first component of $W_t$, that is $W_t = (\calV_t, n-\calV_t)$. 
At time $t$, two situations can occur.
\begin{itemize}
    \item The noisy value of the offspring $x_t'$ is not in $\calV_t(\calP_t)$, so it is added to the population and no element is removed. 
    \item There is an element $x\in\calP_t$ such that $\calV_t(x) = \calV_{t + 1}(x_t')$, so the value is already in the set. In that case, $x$ will be removed from the population and $x_t'$ will be added.
\end{itemize}
From these observations, we deduce the two following lemmas. 
\begin{lemma}
    \label{lemma:increasing}
    $\calV_t(\calP_t)\subset\calV_{t + 1}(\calP_{t + 1})$
\end{lemma}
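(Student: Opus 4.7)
My plan is to proceed by case analysis on what happens in one iteration of Algorithm~\ref{alg:SEMO_without_reevaluation}, and to exploit the special structure of \oneminmax, namely that every noisy objective vector lies on the antichain $\{(k, n-k) \mid k \in [0..n]\}$, so any two such vectors are either equal or incomparable under $\preceq$.

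First I would dispose of the \textbf{else} branch (lines~\textbf{19}--\textbf{21}): here $\calP_{t+1} = \calP_t$ and $\calW_{t+1} = \calW_t$, hence $\calV_t(\calP_t) = \calV_{t+1}(\calP_{t+1})$ and the inclusion is trivial. So I can assume that the condition in line~\textbf{11} is met, and in fact for \oneminmax this condition is always met, because $w = \tilde g(x_t')$ has the form $(k, n-k)$ and is therefore incomparable with any $\calW_t(x)$ of the same form unless the two are equal, so $w \prec \calW_t(x)$ cannot occur.

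In the \textbf{if} branch, $\calP_{t+1}$ consists of $x_t'$ together with every $x \in \calP_t$ satisfying $\calW_t(x) \npreceq w$. Let $v \in \calV_t(\calP_t)$ and pick $x \in \calP_t$ with $\calV_t(x) = v$; then $\calW_t(x) = (v, n-v)$. If $\calW_t(x) \npreceq w$, then $x \in \calP_{t+1}$ with $\calW_{t+1}(x) = \calW_t(x)$, so $v \in \calV_{t+1}(\calP_{t+1})$. Otherwise $\calW_t(x) \preceq w$, and the antichain property of \oneminmax forces $\calW_t(x) = w$, whence $v$ equals the first coordinate of $w = \calW_{t+1}(x_t')$, so again $v \in \calV_{t+1}(\calP_{t+1})$ via the newly inserted offspring. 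This exhausts the cases and yields $\calV_t(\calP_t) \subset \calV_{t+1}(\calP_{t+1})$.

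There is essentially no obstacle here; the one subtlety worth flagging is that the inclusion uses crucially that noisy \oneminmax values are themselves of the form $(k, n-k)$ (one-bit noise only flips a bit, so $\tilde g(x) = g(\tilde x)$ still lies on the antichain). Without this observation one could only say that a value removed from $\calV_t(\calP_t)$ is $\preceq$-below some value present in $\calV_{t+1}(\calP_{t+1})$, which would not give set inclusion. With this structural remark in hand, the proof reduces to reading off the two branches of the algorithm.
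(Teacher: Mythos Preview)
Your argument is correct and is essentially the same as the paper's, just spelled out in more detail: the paper's proof is the single sentence ``an element is removed from the population only if the offspring has the same value,'' which is exactly your observation that $\calW_t(x) \preceq w$ forces $\calW_t(x) = w$ by the antichain structure of \oneminmax. Your explicit handling of the \textbf{else} branch and the remark that one-bit prior noise keeps $\tilde g(x)$ on the antichain are useful clarifications but do not change the underlying approach.
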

\begin{proof}
    An element is removed from the population only if the offspring has the same value.
\end{proof}
\begin{lemma}
    \label{lemma:one-to-one}
    $\calV_t$ is one-to-one.
\end{lemma}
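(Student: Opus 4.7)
The plan is to proceed by induction on $t$. The base case $t=0$ is immediate since $\calP_0 = \{x_{\init}\}$ is a singleton, so the map $\calV_0$ trivially has no two elements sharing an image. For the inductive step, I rely on the structural simplification of Algorithm~\ref{alg:SEMO_without_reevaluation} on \oneminmax that is already noted right after the pseudocode: every stored evaluation $\calW_t(x)$ is of the form $(\calV_t(x), n - \calV_t(x))$, since noisy evaluations of \oneminmax always land on the antichain $\{(k, n-k) \mid k \in [0..n]\}$. Two such vectors are Pareto-comparable if and only if they are equal, so the test $w \prec \calW_t(x)$ in line~11 is never satisfied and the filter $\calW_t(x) \npreceq w$ in line~16 reduces to the plain inequality $\calV_t(x) \neq v$, where $v$ denotes the first component of $w = \tilde g(x_t')$.

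With this in hand, the inductive step becomes a direct bookkeeping argument. Assume $\calV_t$ is one-to-one on $\calP_t$. In the transition from $t$ to $t+1$, the algorithm always enters the \textbf{if} branch, so
\[
    \calP_{t+1} = \{x_t'\} \cup \{x \in \calP_t \mid \calV_t(x) \neq v\},
\]
with $\calV_{t+1}(x_t') = v$ and $\calV_{t+1}(x) = \calV_t(x)$ for every retained parent. By the inductive hypothesis, the kept parents already carry pairwise distinct $\calV$-values, and by the filtering criterion none of these values equals $v$. Hence the values stored in $\calP_{t+1}$ are pairwise distinct, proving that $\calV_{t+1}$ is one-to-one.

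The only subtlety worth flagging is that $\calP_t$ is a multiset, so ``one-to-one'' must be understood as no two distinct occurrences (as multiset elements) sharing a stored value. This is exactly what the argument above delivers, because the filtering step removes \emph{every} parent whose stored value equals $v$, not merely one representative; there is therefore no risk of a duplicate of $v$ slipping through from a previous iteration. Accordingly I do not anticipate a real obstacle in the proof: once the \oneminmax-specific reduction of the Pareto order to equality is made explicit, the one-to-one property is a one-line invariant maintained by the update rule.
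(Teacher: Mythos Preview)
Your proposal is correct and follows exactly the same inductive argument as the paper: the base case is the singleton population, and in the inductive step the only possible collision is with the new offspring $x_t'$, which the update rule handles by removing any parent with the same stored value. The paper's proof is terser, but you have simply spelled out the \oneminmax-specific simplification and the multiset subtlety that the paper leaves implicit.
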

\begin{proof}
    It is true at time $t=0$ because there is only one vector in $\calP_t$. If it is true at time $t$,
    then the only possible collision is with the offspring $x_t'$. In that case the element of the same value will be removed from the population and the offspring will be added. $V_{t +1}$ is still one-to-one. The result is true by induction.
\end{proof}

Another important lemma is the following.

\begin{lemma}
    \label{lemma:maxstack}
    $\forall t, \forall k\in[0..n], |\{x\in\calP_t \mid f(x) = k\}| \le 3$.
\end{lemma}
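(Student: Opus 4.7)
The plan is to combine Lemma~\ref{lemma:one-to-one} with the key structural fact that under one-bit noise the stored noisy value $\calV_t(x)$ of any individual $x$ can differ from its true \onemax-value $f(x)$ by at most one. Once both ingredients are in place, the bound of $3$ is immediate by a pigeonhole argument on the noisy values $\{k-1,k,k+1\}$.

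More precisely, first I would observe that when an individual $x$ is added to the population in some iteration $s \le t$, its stored value $\calV_s(x)$ is the first component of $\tilde g(x)$, which is $f(\tilde x) = \onemax(\tilde x)$, where $\tilde x$ is obtained from $x$ by flipping at most one uniformly chosen bit (with probability $p$) or leaving $x$ unchanged. In either case we therefore have
\[
    \calV_s(x) \in \{f(x)-1,\, f(x),\, f(x)+1\}.
\]
Since an element's stored value is never modified while it remains in the population (see the update rules of Algorithm~\ref{alg:SEMO_without_reevaluation}), the same inclusion holds at any later time $t$ at which $x$ is still present.

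Next I would fix a time $t$ and a value $k \in [0..n]$, and consider the set
\[
    S_k \coloneqq \{x \in \calP_t \mid f(x) = k\}.
\]
By the observation above, $\calV_t(S_k) \subseteq \{k-1, k, k+1\}$, a set of size at most~$3$. Lemma~\ref{lemma:one-to-one} tells us that $\calV_t$ is injective on $\calP_t$, hence in particular on $S_k$. Combining injectivity with the inclusion yields $|S_k| \le 3$, which is exactly the claim.

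I do not expect any serious obstacle here: the proof is essentially a one-line consequence of the one-bit noise model plus Lemma~\ref{lemma:one-to-one}. The only thing to be careful about is ensuring that stored values are never updated after insertion (so that the bound $|\calV_s(x) - f(x)| \le 1$ proved at the moment of entry still applies at time $t$); this is visible directly from the pseudocode of Algorithm~\ref{alg:SEMO_without_reevaluation}, where $\calW_{t+1}(x) \gets \calW_t(x)$ whenever $x$ survives.
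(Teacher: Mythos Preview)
Your proof is correct and follows essentially the same argument as the paper: the one-bit noise forces $\calV_t(x)\in\{k-1,k,k+1\}$ whenever $f(x)=k$, and injectivity of $\calV_t$ (Lemma~\ref{lemma:one-to-one}) then caps the count at~$3$. Your added remark that stored values are never updated after insertion is a useful clarification, but otherwise the two proofs coincide.
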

\begin{proof}
    If $f(x) = k$, then because of one-bit noise $\calV_t(x)\in\{k - 1, k, k + 1\}$.
    Since $\calV_t$ is one-to-one, there are at most $3$ elements of value $k$ by $f$.
\end{proof}
\subsection{Time Needed to Find the Extreme Values}
We define 
\[
    T_{\ex} = \min\{t \mid \{0,n\}\subseteq\calV_t(\calP_t)\},
\]
the first time when the extreme values are noisily found.
\begin{theorem}
    \label{theorem:reaching_borders}
    There is a constant $\alpha > 0$ such that if $p \le \tfrac{\alpha}{n}$, then
    \[
        E[T_{\ex}] = O(n^2\log(n)).
    \]
\end{theorem}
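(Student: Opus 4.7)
The plan is to reduce to a one-sided hitting time by symmetry and then apply multiplicative drift to the potential $a_t := \min \calV_t(\calP_t)$. By the bit-complement symmetry of \oneminmax and of one-bit noise, the processes $\min \calV_t(\calP_t)$ and $n - \max \calV_t(\calP_t)$ have the same law, so $E[T_{\ex}] \le E[T^-] + E[T^+] = 2\,E[T^-]$, where $T^-$ (resp.\ $T^+$) is the first time that $0$ (resp.\ $n$) enters $\calV_t(\calP_t)$. By Lemma~\ref{lemma:increasing} the set $\calV_t(\calP_t)$ only grows, so $a_t$ is non-increasing and $T^- = \inf\{t : a_t = 0\}$. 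It therefore suffices to prove a multiplicative drift estimate
\[
P\left[a_{t+1} \le a_t - 1 \mid \calF_t\right] \ge c\,\frac{a_t}{n^2}
\]
for some constant $c > 0$ whenever $a_t \ge 1$, from which the standard multiplicative drift theorem yields $E[T^-] \le (1 + \ln n)\,n^2 / c = O(n^2 \log n)$.

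To establish the drift, let $x^* \in \calP_t$ be the unique element (Lemma~\ref{lemma:one-to-one}) with $\calV_t(x^*) = a_t$ and let $m_t := \min_{y \in \calP_t} f(y)$. Because one-bit noise shifts any recorded value by at most one, both $f(x^*)$ and $m_t$ lie in $\{a_t - 1, a_t, a_t + 1\}$. The canonical drift scenario is: pick a parent $y \in \calP_t$ with $f(y) \le a_t$ (probability $\ge 1/(n+1)$), flip one of its $f(y)$ one-bits (probability $f(y)/n$), and have the subsequent noise evaluation leave the offspring intact (probability $1-p$); the offspring's stored value is then $f(y)-1 < a_t$, forcing $a_{t+1} < a_t$. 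In the regime $m_t \le a_t$ this alone delivers the target $\Omega(a_t / n^2)$ drift, provided the constant $\alpha$ is chosen small enough that $1 - p$ stays bounded below.

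The main obstacle is the adversarial regime $m_t = a_t + 1$, in which noise has recorded $x^*$ strictly below the true fitness of every member of $\calP_t$, so no parent with $f \le a_t$ is available. The key observation is that choosing $x^*$ as parent (probability $\ge 1/(n+1)$) and flipping one of its $a_t + 1$ one-bits (probability $(a_t+1)/n$) always produces an offspring $x_t'$ of true fitness exactly $a_t$; a short case analysis on the noise outcome $\tilde{g}_1(x_t') \in \{a_t - 1, a_t, a_t + 1\}$ shows that in \emph{every} sub-case either $a_{t+1} < a_t$ already (when noise further reduces the offspring) or $x_t'$ enters $\calP_{t+1}$ with true fitness $a_t$, placing the process into the good regime $m_{t+1} \le a_t$. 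Thus the bad regime is escaped with probability $\Omega(a_t/n^2)$ per step, after which the canonical argument decreases $a_t$ in another $O(n^2/a_t)$ expected steps. I would wrap this up either by a fitness-level argument summing $\sum_{k=1}^{n} O(n^2/k) = O(n^2 \log n)$, or by passing to the adjusted potential $\phi_t := a_t + \indicator{m_t > a_t}$, for which a uniform $\Omega(a_t/n^2)$ multiplicative drift can be verified directly. The constant $\alpha$ must be picked small enough that the $(1-p)$ factors and second-order noise corrections do not swamp these estimates.
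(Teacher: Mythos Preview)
Your approach is essentially the paper's: the paper defines $\ell_t = j_t + 1 - \indicator{\exists x\in\calP_t,\, \calV_t(x)=f(x)=j_t}$, which is exactly your adjusted potential $\phi_t = a_t + \indicator{m_t > a_t}$ up to the precise choice of indicator (the paper penalizes ``the minimum-recorded element is mis-evaluated'' rather than ``no element has true fitness $\le a_t$''), and then applies multiplicative drift. The one place where the paper is more explicit than your sketch is the upward-drift bound: using Lemma~\ref{lemma:maxstack} it shows that the indicator can flip the wrong way only when one of at most six specific parents is selected \emph{and} the offspring is mis-evaluated, giving an upward contribution of at most $6p/|\calP_t|$, which is the estimate you need to make ``second-order noise corrections do not swamp these estimates'' precise and to close the argument at $p\le \tfrac{1}{14n}$.
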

\begin{proof}
    We denote as $j_t$ the minimum value of $\calV_t(\calP_t)$ and we introduce the potential function $\ell_t = j_t - \bbone{\exists x\in\calP_t, \calV_t(x) = f(x) = j_t} + 1$. This variable equals $j_t+1$, but with a bonus downward if the unique element $x$ with $\calV_t(x)=j_t$ is correctly evaluated, that is $f(x) = j_t$. We analyze the drift, that is the expected change of the process $\ell_t$.
    
    We consider first the case where $f(x) = j_t$. 
    \begin{itemize}
        \item The probability of selecting $x$ as a parent for mutation is $\tfrac{1}{|\calP_t|}$. The probability that its offspring has a lower value is $\tfrac{j_t}{n}$ (namely, selecting a $1$ in $x$ and flipping it to a $0$). The probability of correctly evaluating the offspring is $1 - p$. The overall probability of this event is $\tfrac{j_t(1 - p)}{n|\calP_t|}$. In that case $\ell_t - \ell_{t + 1} = 1$. Therefore, when $f(x) = j_t$, the downward component of the drift is at least $\tfrac{j_t(1 - p)}{n|\calP_t|}$.
        \item  For the upward component of the drift, since $j_{t + 1} \le j_t$ as stated in Lemma~\ref{lemma:increasing}, a negative shift only happens if the offspring $x_t'$ is wrongly evaluated to $j_t$. We would have $j_{t + 1} = j_t$ but $\bbone{\exists x\in\calP_{t + 1}, \calV_{t + 1}(x) = f(x) = j_t} = 0$ and $\ell_t - \ell_{t + 1} = -1$. Since in general $\calV_t(x)\in\{f(x) - 1, f(x), f(x) + 1\}$, the value of the offspring is $j_t - 1$ or $j_t + 1$. The mutation is also one-bit, so the parent $x_t$ at time $t$ must be of value $j_t - 2$,  $j_t$ or $j_t + 2$. The value $j_t - 2$ is not an option, as the noisy value of $x_t$ would be lower than $j_t - 1$ so strictly lower than $j_t$, so there is at most $6$ elements verifying this property according to Lemma~\ref{lemma:maxstack}. The probability of selecting one of those to create the offspring is lower than $\tfrac{6}{|\calP_t|}$. The probability of this offspring to be wrongly evaluated to $j_t$ is lower than $p$.
    \end{itemize} 
    The total drift when $f(x) = j_t$ is $\tfrac{j_t(1 - p)}{n|\calP_t|} - \tfrac{6p}{|\calP_t|}$.

    Now consider the case where $f(x) \neq j_t$. In that case, $\ell_t - \ell_{t + 1} \ge 0$ so the drift can only be downward. With similar arguments, we can show that in that case the drift is at least $\tfrac{j_t(1 - p)}{n|\calP_t|}$ which is greater than $\tfrac{j_t(1 - p)}{n|\calP_t|} - \tfrac{6p}{|\calP_t|}$.
    
    In both cases, the drift is greater than 
    \[\frac{j_t(1 - p)}{n|\calP_t|} - \frac{6p}{|\calP_t|}.\] This is enough to conclude by the multiplicative drift theorem~\cite{DoerrJW12algo} that the lowest value is reached in $O(n^2\log(n))$ when $p\le\frac{1}{14n}$. As a consequence of Lemma~\ref{lemma:increasing}, it will remain in $\calV_t(\calP_t)$. Symmetrically, the time to reach the highest value is identical.
\end{proof}

\subsection{Filling the Pareto Front Once the Extreme Values are Found}

To derive the total time, we additionally analyze the average time to fill the full Pareto front after the extreme values are found.

\begin{theorem}
    We consider the same algorithm, but where the extreme elements are initially in the set, that is ${0, n}\in\calV_0(\calP_0)$. We note this event $Y_0$.
    Then there exists a constant $\alpha>0$ such that
    \[
        E_{Y_0}[T_{\total}] =O(n^2).
    \]
\end{theorem}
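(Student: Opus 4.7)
The plan is to apply the additive drift theorem to the potential
\[
  \Phi_t \coloneqq |M_t| \quad\text{where}\quad M_t \coloneqq [0..n]\setminus\calV_t(\calP_t),
\]
the set of Pareto-optimal objective values not yet witnessed by the population. By Lemma~\ref{lemma:increasing} the process $(\Phi_t)$ is non-increasing, and under the hypothesis $Y_0$ we have $\{0,n\}\subseteq\calV_0(\calP_0)$, so $\Phi_0\le n-1$. It therefore suffices to establish a one-step drift bound $E[\Phi_t-\Phi_{t+1}\mid\calF_t]\ge c/n$ for some absolute constant $c>0$ whenever $\Phi_t>0$; the additive drift theorem then immediately yields $E_{Y_0}[T_{\total}]\le n\Phi_0/c=O(n^2)$.

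To lower-bound the drift, I would decompose $\calV_t(\calP_t)$ into its maximal contiguous sub-intervals, separated by gaps $(a,b)$ with $a,b\in\calV_t(\calP_t)$, $b\ge a+2$, and $[a+1..b-1]\subseteq M_t$. For each gap there are two canonical pathways for adding a new value: the left pathway selects $x^{(a)}$ (probability $1/|\calP_t|$), flips one of its zero-bits (probability $(n-f(x^{(a)}))/n$), and the offspring evaluates noise-free (probability $1-p$); the right pathway is symmetric, using $x^{(b)}$ with a one-bit flip. When both boundary individuals are correctly evaluated, i.e.\ $f(x^{(a)})=a$ and $f(x^{(b)})=b$, the two offsprings have noisy values $a+1$ and $b-1$, both of which lie in $M_t$ and are hence accepted. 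The combined contribution of the gap to the drift is then at least
\[
  \frac{(1-p)\bigl((n-a)+b\bigr)}{n\,|\calP_t|}\ \ge\ \frac{(1-p)(n+2)}{n(n+1)}\ =\ \Omega(1/n),
\]
using $b-a\ge 2$ and $|\calP_t|\le n+1$. Since $\Phi_t>0$ forces the existence of at least one gap, the drift is already $\Omega(1/n)$ in the fully-clean case, and additional gaps only help.

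The main obstacle, and where the noise actually bites, is handling boundary individuals $x^{(v)}$ whose true fitness deviates from their stored noisy evaluation ($f(x^{(v)})\in\{v-1,v+1\}$); for such individuals the canonical pathway produces $\calV(\text{offspring})\in\{v,v\pm 2\}$ rather than $v\pm 1$, killing that pathway's contribution. I would deal with this exactly as in the proof of Theorem~\ref{theorem:reaching_borders}: when $x^{(v)}$ entered the population, its evaluation was noise-free with probability $1-p$, so a standard conditional argument under the one-bit noise model gives $\Pr[f(x^{(v)})=v\mid \calV(x^{(v)})=v]\ge 1-O(p)$. By linearity of expectation, the drift lost because of unclean boundary individuals is at most $O(p)$ times the clean-case drift, i.e.\ $O(p/n)$. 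For $p\le\alpha/n$ with $\alpha>0$ a sufficiently small constant, this loss is absorbed by the $\Omega(1/n)$ lower bound already established, so the net drift remains $\Omega(1/n)$. Applying the additive drift theorem then yields $E_{Y_0}[T_{\total}]=O(n^2)$, completing the proof.
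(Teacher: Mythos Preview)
Your proposal has two genuine gaps, and the paper takes a rather different route to avoid them.

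\textbf{The potential does not match the stopping time.} You set $\Phi_t=|[0..n]\setminus\calV_t(\calP_t)|$, which is indeed non-increasing by Lemma~\ref{lemma:increasing}. But $T_{\total}$ is defined via the \emph{true} objective values, $T_{\total}=\min\{t:f(\calP_t)=[0..n]\}$. Having $\calV_t(\calP_t)=[0..n]$ does not imply $f(\calP_t)=[0..n]$: any mis-evaluated individual causes a collision in the true values. So even if your drift argument were correct, $\Phi_t=0$ would not give you~$T_{\total}$, and a further phase would have to be analyzed.

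\textbf{The ``clean boundary'' argument fails as a conditional drift bound.} The additive drift theorem needs $E[\Phi_t-\Phi_{t+1}\mid\calF_t]\ge c/n$ for every state in the filtration, and given $\calF_t$ the true values $f(x^{(v)})$ are already determined---you cannot average over ``probability the boundary individual is clean''. Concretely, take the reachable state where $\calV_t(\calP_t)=\{0,2,3,\dots,n\}$ and both gap-boundary individuals satisfy $f(x^{(0)})=f(x^{(2)})=1$ (each mis-evaluated in opposite directions). Then every one-bit mutation of $x^{(0)}$ or $x^{(2)}$ has true value $0$ or $2$, and a noise-free evaluation never produces the missing value~$1$; filling the gap requires a further noisy evaluation, and a short calculation shows the drift in this state is only $O(p/n^2)$. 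Your claimed uniform $\Omega(1/n)$ drift therefore does not hold.

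The paper avoids both issues by choosing a different potential: it counts the number of \emph{correctly evaluated} individuals (split into non-extreme and extreme parts, weighted $1$ and $n$ respectively). This count can decrease---but only when the offspring is mis-evaluated, which happens with probability~$p$---so the negative drift is explicit and controllable. The positive drift comes from creating a fresh, correctly evaluated individual at a currently missing true value, using the element just below (or above) it in true fitness; crucially, the existence of such a neighbor is guaranteed by the presence of the extreme elements in $\calV_t(\calP_t)$, not by any cleanliness assumption on specific individuals. When this count reaches its maximum, $f(\calP_t)=[0..n]$ by construction.
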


\begin{proof}
   The proof relies on the fact that since the extreme values have been found, we know there are elements $x, y\in\calP_t$ verifying $\calV_t(x) = 0, \calV_t(y) = n$ so $f(x) \le 1$ and $f(y)\ge n - 1$. 
   
   If $\left[1..\left\lfloor \tfrac{n}{2} \right\rfloor\right] \setminus f(\calP_t)$ is non-empty, then let us define \[m = \min\left(\left[1..\left\lfloor \tfrac{n}{2} \right\rfloor\right] \setminus f(\calP_t)\right).\] Since $x\in\calP_t$, $\left[0..\left\lfloor \tfrac{n}{2} \right\rfloor\right] \cap f(\calP_t)$ is non-empty. So there is an element $z\in\calP_t$ such that $f(z) = m - 1$. The probability of selecting it to generate the offspring $x_t'$ is $\tfrac{1}{|\calP_t|}\ge\tfrac{1}{n + 1}$. The probability that this offspring ends up with a real value $m$ is $\frac{n - f(z)}{n}\ge\frac{1}{2}$. The probability of evaluating correctly this offspring is $1 - p$. A symmetrical argument can be made for $\left[\left\lceil  \tfrac{n}{2}  \right\rceil ..n - 1\right] \setminus f(\calP_t)$ which yields the same result. 
   
   Hence, we see that the variable counting elements in the population that are not extreme elements and are correctly evaluated features an upward drift of at least $\frac{1 - p}{2n}$. For the downward drift, observe that no more than one element is deleted at each step, and that elements that are correctly evaluated can only be lost if the offspring is miss-evaluated, which happens with probability $p$. Thus, the total drift on this count variable is at least $\frac{1 - p}{2n} - p$.
   
   As for the variable counting correctly evaluated extreme elements, an argument similar to the proof of Theorem~\ref{theorem:reaching_borders} shows it features a positive drift component of at least $\frac{1}{n+1}\frac{1}{n}(1-p)$ and a negative drift of no more than $\frac{12p}{n}$, and so a total drift of at least $\frac{1}{n+1}\frac{1}{n}(1-p) - \frac{12p}{n}$. 
   
   For a well-chosen constant $\alpha > 0$, if $p\le \tfrac{\alpha}{n}$ then both drifts are positive. By considering a weighted sum of the two count variables, with a weight of $1$ for the first and a weight of $n$ for the second, the additive drift theorem~\cite{HeY01} yields the desired result.
\end{proof}

So, the total runtime of the SEMO without reevaluation is $O(n^2\log(n)) + O(n^2) = O(n^2\log(n))$ as announced.

We observe that this second phase of the run is finished in a relatively short time. The reason for this is that once we have the extremal elements in the population, we can generate missing solutions (in particular, those which have been lost due to faulty fitness evaluations) from a neighboring solution closer to the boundaries. This is relatively easy since mutation has a constant probability to generate the desired offspring from such a parent. Here we profit from the diversity mechanism of the SEMO, which tries to keep one solution per objective value in the population. This seems to be the main reason for the relatively high robustness of the SEMO to noise, and we would speculate that other MOEAs profit from their diversity mechanisms (needed to construct the whole Pareto front) in a similar fashion.

\section{Runtime Analysis of the SEMO With Reevaluation}
In this section, we analyze the expected number of iterations of the main loop needed to reach the Pareto front in the SEMO with the SEMO with reevaluation~\ref{alg:SEMO_with_reevaluation}. We define the stopping time 
\begin{align*}
    T_{\total} &\coloneqq \min\{t \mid g(\calP_t) \mbox{ is equal to the Pareto front}\} \\ &= \min\{t \mid f(\calP_t) = [0..n]\}
\end{align*}   
and show the following theorem.

\begin{theorem*}
There is a constant $\beta>0$ such that the following holds. For all noise rates $p \le \tfrac{\beta}{n^2}$, the SEMO with reevaluations finds the complete Pareto front of the \oneminmax benchmark in expected time
\[
E[T_{\total}] = O(n^2\log(n)).
\]
\end{theorem*}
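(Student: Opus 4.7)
The plan is to adapt the drift-analysis framework used for the SEMO without reevaluation to the reevaluation setting. I will define a potential function $\Phi_t$ measuring how far $\calP_t$ is from covering the full Pareto front and aim to establish a multiplicative drift of rate $\Omega(1/n^2)$. Since any natural choice of $\Phi$ satisfies $\Phi_0 = O(n^2)$, the multiplicative drift theorem~\cite{DoerrJW12algo} then yields $E[T_{\total}] = O(n^2 \log n)$ as claimed.

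The first building block is a per-step noise bound: since $|\calP_t'| \le n + 2$, a union bound gives $P(B_t) \le (n+2) p = O(\beta/n)$, where $B_t$ denotes the event that some element of $\calP_t'$ is miss-evaluated in step $t$. Conditional on $\neg B_t$, the $\elim$ function keeps exactly one element per true $f$-value, so the step has the same effect as one of noiseless SEMO; in particular $f(\calP_{t+1}) = f(\calP_t) \cup \{f(x_t')\}$. The second building block bounds the damage in noisy steps: a case analysis of $\elim$ on \oneminmax shows that any fixed element $y_k \in \calP_t$ with true $f$-value $k$ is removed from $\calP_{t+1}$ only if its noisy value collides with that of another element of $\calP_t'$ and the random order in $\elim$ discards $y_k$; under one-bit noise this occurs with probability at most $O(p)$ per element, so the expected number of distinct true $f$-values lost in a single step is $O((n+1) p) = O(\beta/n)$.

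The main obstacle is combining these estimates into a drift that is multiplicative in the chosen potential. The naive choice $\Phi_t = (n+1) - |f(\calP_t)|$ fails, because when $\Phi_t$ is small and the missing value lies at an end of the covered range, the gain rate is only $\Omega(1/n^2)$ while the noise-induced loss rate is $O(\beta/n)$. To overcome this I will use a weighted potential
\[
\Phi_t := n \cdot (j_t^- + (n - j_t^+)) + H_t,
\]
where $j_t^-$ and $j_t^+$ denote the minimum and maximum of $f(\calP_t)$ and $H_t$ counts holes inside $[j_t^-, j_t^+]$. The weight $n$ on the boundary part roughly equalizes the per-noise-free-step rates at which extending the interval endpoints (probability $\Theta(1/n^2)$ per missing boundary value) and filling an interior hole (probability $\Theta(1/n)$ per hole cluster) decrease $\Phi_t$, so that both contribute $\Omega(\Phi_t/n^2)$ to the drift. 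The delicate point is that losing an interval endpoint may ``unmask'' a chain of previously-interior holes and increase $\Phi_t$ non-trivially in a single step; I will argue that since endpoint losses occur with probability only $O(p) = O(\beta/n^2)$ and the offspring $x_t'$ typically occupies a nearby $f$-value that softens the blow, the expected contribution of such events is $O(\beta \Phi_t / n^2)$, which is absorbed into the gain for $\beta$ sufficiently small. Interior losses and losses of the offspring itself are handled analogously. Combining these estimates yields $E[\Phi_{t+1} \mid \calP_t] \le (1 - c/n^2) \Phi_t$ for some constant $c > 0$, and the multiplicative drift theorem concludes.
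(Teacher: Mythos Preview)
Your approach is genuinely different from the paper's. The paper splits the analysis into two phases: first it applies multiplicative drift to $d_t = n + \min f(\calP_t) - \max f(\calP_t)$ to bound the time until both extreme values $0$ and $n$ appear in $f(\calP_t)$; then, since with reevaluation the extremes can be lost again, it couples the original process with an artificial ``$K$-extreme-keeping'' SEMO that re-inserts the extremes for free, bounds the hitting time of that process by additive drift on the number of covered values, shows the two processes coincide with probability $(1-p)^{10K}$, and finishes with a restart argument. Your single weighted potential $\Phi_t = n D_t + H_t$ is more direct and, if it works, avoids the coupling machinery entirely.

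There is, however, a concrete gap in your bad-drift accounting. Your claim that endpoint-loss events contribute $O(\beta\,\Phi_t/n^2)$ in expectation is false as stated. Take $j_t^-=0$, $j_t^+=n$, and a single interior hole at position $n/2$, so $\Phi_t = 1$. The element at position~$0$ is miss-evaluated to noisy value~$1$ with probability $p$ and then collides with the (correctly evaluated) element at position~$1$; with probability $\tfrac12$ it is the one removed, sending $j^-\to 1$ and $\Phi\to n+1$. This alone gives a negative contribution of order $p\cdot n = \Theta(\beta/n)$, which is a factor $n$ larger than your claimed $O(\beta \Phi_t/n^2)=O(\beta/n^2)$. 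The ``offspring softens the blow'' heuristic does not help here because the offending collision is with an existing population member, not the offspring.

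The route is salvageable, but via a different accounting than you sketch. The correct observation is that the \emph{good} drift is not merely $\Omega(\Phi_t/n^2)$: whenever $\Phi_t>0$ there is always either a boundary to extend or a hole with a filled neighbour, so one step fills a missing value with probability $\Omega(1/n)$, giving good drift $\ge c_1\max(1/n,\Phi_t/n^2)$. On the other side, one can show the total bad drift is $O(\beta/n)$ uniformly in the state (each of the $O(n)$ population members is lost from its $f$-value with probability $O(p)$, and an endpoint loss increases $\Phi$ by $O(n)$ since either the next position is filled or only the offspring can cause the collision, in which case it occupies the adjacent slot). Then for $\Phi_t\le n$ the additive good drift $\Omega(1/n)$ already dominates $O(\beta/n)$ and exceeds $c\Phi_t/n^2$; for $\Phi_t>n$ one has $D_t\ge 1$ and the boundary term gives good drift $\Omega(\Phi_t/n^2)\ge \Omega(1/n)\gg O(\beta/n)$. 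Either way $E[\Phi_t-\Phi_{t+1}\mid\calP_t]\ge c\Phi_t/n^2$ follows, but the two-regime case split is essential and is precisely what your ``delicate point'' paragraph glosses over.
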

We will assume in the whole section that $p \le \tfrac{\beta}{n^2}$, with $\beta>0$ a small enough constant.
We use $\calP_t$ for the population at time $t$, $\calP_t' = \calP_t \cup \{x_t'\}$ for the population plus the mutated vector. $L_t$ is the cardinality of $f(\calP_t)$ and $L_t'$ the cardinality of $\calP_t'$.
Some of these notations are already
used in the definition of  Algorithm~\ref{alg:SEMO_with_reevaluation}.

\subsection{Time to Find the Extreme Values}
Here, we define $T_{\ex} = \min\{t\mid 0, n\in f(\calP_t)\}$ in terms of real values. Contrary to the other algorithm, these extreme values can be lost because of the noisy reevaluations. We show that still $E[T_{\ex}]=O(n^2\log(n))$. 

To prove this result, we consider the variable $d_t = n + \min f(\calP_t) - \max f(\calP_t)$ and apply the multiplicative drift theorem~\cite{DoerrJW12algo} to it. 
We note that the mutation step gives a drift (towards zero) of order at least $\tfrac{d_t}{n^2}$, same as in a non-noisy setting. Different from the noiseless case, the selection can then lead to drift away from zero of order $\calO(p)$. Note that in any case, $d_t$ can change by at most one. By our assumption on $p$ the drift into the wrong direction is small compared to the drift towards zero, so asymptotically we have a ``multiplicative'' drift of order at least $\tfrac{d_t}{n^2}$, which allows a straight-forward application of the multiplicative drift theorem to prove the announced result.

\subsection{Computing the Pareto Front Once the Extreme Values are Found}
We deal with the last phase by regarding an artificially modified process of the original SEMO with reevaluation. We denote this new process as the $K$-extreme values keeping SEMO, $K\in \N \cup \{+\infty\}$. This process is identical to the SEMO with reevaluation, except that it will tzappend the extreme elements to the population at the end of each iteration if they have been lost and if less than $K$ iterations have been executed. Variables related to this new process feature a hat and a super-index $K$ (e.g. $\hat{\calP}_t^K$). In the whole subsection, we will use the event $Z_0 = (\{0,n\}\subset f(\calP_0))$. We also define $L_t$ as the cardinality of $f(\calP_t)$.

By using the additive drift theorem \cite{HeY01} on the variable $\hat{L}_t^K$, we show the following lemma.
\begin{lemma}
    \label{lemma:markov}
    \[
        {\Pr}_{Z_0}[\hat{T}_{total}^K \ge K] = \calO\left(\frac{n^2}{K}\right).
    \]
\end{lemma}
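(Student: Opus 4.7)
The plan is to bound $E_{Z_0}[\hat{T}_{\total}^K]$ via the additive drift theorem applied to the potential $D_t := (n+1) - \hat{L}_t^K$, and then to conclude by Markov's inequality. A useful preliminary observation is that the event $\{\hat{T}_{\total}^K \ge K\}$ depends only on the first $K$ iterations of the process, during which the keeping rule guarantees $\{0,n\}\subseteq f(\hat{\calP}_t^K)$. Coupling the $K$-keeping process with an analogous variant that appends lost extremes at every iteration (an $\infty$-keeping process) using the same random bits, the two coincide on the first $K$ iterations, so it suffices to bound $\Pr_{Z_0}[\hat{T}_{\total}^\infty \ge K]$ for the cleaner infinite-keeping process.

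For the drift, I would show $E[\hat{L}_{t+1}^\infty - \hat{L}_t^\infty\mid \hat{\calP}_t^\infty,\, D_t>0]\ge c/n$ for some constant $c>0$. The positive contribution is obtained by adapting the argument of the filling-phase theorem in the previous section: if $D_t>0$, there is a missing value $m\in[0..n]$; by taking the smallest missing value when $m\le n/2$ (so that $m-1\in f(\hat{\calP}_t^\infty)$ and its holder has $\ge n/2$ zero-bits), or symmetrically the largest missing value when $m>n/2$, one finds a parent whose selection probability is $\ge 1/|\hat{\calP}_t^\infty|\ge 1/(n+1)$ and which produces, via a one-bit mutation, an offspring with true $f$-value $m$ with probability $\ge 1/2$. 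Conditioning on no noise in the at most $n+2$ reevaluations of this iteration, an event of probability $\ge 1-(n+2)p$, the $\elim$ step preserves every existing distinct true $f$-value and admits the offspring, so $\hat{L}_{t+1}^\infty \ge \hat{L}_t^\infty + 1$. This gives a positive contribution of at least $\tfrac{1-(n+2)p}{2(n+1)}=\Omega(1/n)$. For the negative contribution, each distinct true $f$-value lost in $\elim$ can be charged to at least one noisy reevaluation among the $|\calP_t'|\le n+2$ elements processed, since two individuals with distinct true $f$-values cannot be merged by $\elim$ unless at least one of them is mis-evaluated. Hence the expected loss per step is at most $(n+2)p=O(\beta/n)$, and for $\beta$ small enough the net drift exceeds $c/n$.

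Since $D_0\le n-1$ under $Z_0$, the additive drift theorem~\cite{HeY01} yields $E_{Z_0}[\hat{T}_{\total}^\infty]=O(n^2)$, and Markov's inequality then gives $\Pr_{Z_0}[\hat{T}_{\total}^\infty \ge K]= O(n^2/K)$, which transfers to $\hat{T}_{\total}^K$ by the coupling. The main obstacle I foresee is the negative-drift accounting: one has to verify carefully that noisy reevaluations, combined with the order-dependent $\elim$ function, lose at most as many distinct true $f$-values as the number of noisy reevaluations in that iteration, so that the bound $O(pn)=O(\beta/n)$ indeed beats the positive $\Omega(1/n)$ contribution for a suitable universal choice of $\beta$.
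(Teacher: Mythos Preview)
Your proposal is correct and follows essentially the same route as the paper: the paper, too, applies the additive drift theorem of~\cite{HeY01} to the variable $\hat{L}_t^K$ (equivalently to your $D_t=(n+1)-\hat{L}_t^K$) to obtain $E_{Z_0}[\hat{T}_{\total}^K]=O(n^2)$, and then concludes the probability bound via Markov's inequality. Your coupling with the $\infty$-keeping process is a harmless technical convenience; the negative-drift charging you worry about does go through (each lost true $f$-value can be charged to a mis-evaluated element, with at most a constant-factor overcount), giving a loss of $O((n{+}O(1))p)=O(\beta/n)$ per step as you claim.
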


The following lemma shows that the two algorithms behave in the same way with high probability.
\begin{lemma}
    \label{lemma:diff process}
    For any $K\in \N$, we have
    \[
        {\Pr}_{Z_0}[T_{\total} = \hat{T}_{total}^K] \ge (1-p)^{10K}.
    \]
\end{lemma}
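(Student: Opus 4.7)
The plan is to set up a coupling between the SEMO with reevaluation and the $K$-extreme-values keeping SEMO in which both algorithms use the same initial individual, the same parent and mutation choices, and the same noisy evaluations at every iteration. Under this coupling, the $K$-keeping variant only differs from the standard one through its ``reinsert a missing extreme element'' step, which is triggered only in iterations $t<K$ and only when an extreme element has actually been removed by the preceding elim step. Hence, if no extreme element is ever lost during iterations $0,\dots,K-1$ of the standard SEMO, the two trajectories coincide for all $t$ (for $t\ge K$ the two algorithms are literally identical), and in particular $T_{\total}=\hat T_{\total}^K$. It therefore suffices to lower bound, conditional on $Z_0$, the probability of the event
\[
\calE = \bigcap_{t=0}^{K-1}\bigl\{\{\mathbf{0},\mathbf{1}\}\subseteq\calP_{t+1}\bigr\},
\]
where $\mathbf{0}$ and $\mathbf{1}$ denote the all-zeros and all-ones bit strings.

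Fix $t<K$ and condition on a history with $\{\mathbf{0},\mathbf{1}\}\subseteq\calP_t$. Define the ``critical'' set $S_t\subseteq\calP_t'=\calP_t\cup\{x_t'\}$ of those elements whose true $f$-value lies in $\{0,1,n-1,n\}$. The key structural observation is that, after the previous elim call, the elements of $\calP_t$ have pairwise distinct noisy $\tilde f$-values; combined with $|\tilde f(x)-f(x)|\le 1$ under one-bit noise, this implies that at most three elements of $\calP_t$ can have $f\in\{0,1\}$ (their previous $\tilde f$-values lie in $\{0,1,2\}$), at most three can have $f\in\{n-1,n\}$, and adding the single offspring $x_t'$ then yields $|S_t|\le 7\le 10$.

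If every element of $S_t$ is evaluated correctly in the current iteration --- an event of probability at least $(1-p)^{|S_t|}\ge(1-p)^{10}$ --- then every element of $\calP_t'$ whose current noisy value is $\tilde f=0$ must be a copy of $\mathbf{0}$: elements with $f=1$ lie in $S_t$ and are therefore evaluated to $\tilde f=1$, while elements with $f\ge 2$ have $\tilde f\ge 1$ under one-bit noise anyway. Hence at least one copy of $\mathbf{0}$ survives the elim step, and a symmetric argument handles $\mathbf{1}$. Multiplying these conditional bounds over $t=0,\dots,K-1$ by the tower property yields $\Pr_{Z_0}[\calE]\ge(1-p)^{10K}$ and therefore the claim.

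The main obstacle is the combinatorial bound $|S_t|\le 10$, which plays for the SEMO with reevaluation the same role that Lemma~\ref{lemma:maxstack} plays for the non-reevaluating variant; it requires carefully exploiting the ``one representative per noisy $\tilde f$-value'' invariant of elim together with the range constraint $|\tilde f(x)-f(x)|\le 1$. Once this constant-size bound on the ``dangerous'' elements is in place, the coupling argument and the independent-noise product bound are entirely routine.
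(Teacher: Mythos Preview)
Your proposal is correct and follows essentially the same route as the paper: couple the two processes, argue that they coincide whenever the classical process keeps $\mathbf{0}$ and $\mathbf{1}$ throughout iterations $0,\dots,K-1$, and lower bound this probability by $(1-p)^{10}$ per iteration. The paper's text only states the coupling idea in one line; your combinatorial bound $|S_t|\le 7\le 10$ (via the ``one element per noisy value'' invariant of \elim\ combined with $|\tilde f-f|\le 1$) is exactly the kind of detail the long version supplies, so the approach matches.
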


Indeed, of the classical process does not lose its extreme values in the $K$ first iterations, then it is identical to the $K$-extreme values keeping process.

We can finally deduce the last result.
\begin{lemma}
    \label{lemma:full phase}
    There exist constants $M>0$ and $\mu\in (0,1)$ such that for any $n\in\N$, with $t_0=Mn^2\log(n)$ and any distribution $\pi$ of the initial population $\calP_0$,
    \[
        {\Pr}_\pi[T_{\total} > t_0] \le \mu.
    \]
\end{lemma}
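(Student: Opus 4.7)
The plan is to glue the two previous lemmas together with the bound $E_\pi[T_{\ex}] = O(n^2\log n)$ from Section~5.1, using the strong Markov property at $T_{\ex}$ to split the run into a first phase that recovers the extreme objective values and a second phase that completes the Pareto front.

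First I would apply Markov's inequality to $E_\pi[T_{\ex}] = O(n^2\log n)$, which holds for any initial distribution $\pi$ because the multiplicative drift argument on $d_t = n + \min f(\calP_t) - \max f(\calP_t)$ starts from at most $n$ regardless of $\calP_0$. This yields a constant $M_1$ such that $\Pr_\pi[T_{\ex}\le M_1 n^2\log n]\ge 1/2$. By the strong Markov property at the stopping time $T_{\ex}$, on this event the post-$T_{\ex}$ process is distributed as a fresh SEMO with reevaluation whose initial population $\calP_{T_{\ex}}$ satisfies $Z_0$ by definition, so the lemmas of this subsection apply pointwise after integration over $\calP_{T_{\ex}}$.

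For the second phase I would set $K = C_0 n^2$ with $C_0$ a constant to be fixed, and combine Lemmas~\ref{lemma:markov} and~\ref{lemma:diff process}. Lemma~\ref{lemma:markov} gives $\Pr_{Z_0}[\hat{T}_{\total}^K \ge K] \le c\,n^2/K = c/C_0$ for an absolute constant $c$, which I make at most $1/8$ by taking $C_0$ large. Lemma~\ref{lemma:diff process} together with the bound $(1-x)\ge e^{-2x}$ for $x\in[0,1/2]$ yields
\[
    \Pr_{Z_0}[T_{\total} = \hat{T}_{\total}^K] \;\ge\; (1-p)^{10K} \;\ge\; \exp(-20\beta C_0),
\]
which I make at least $3/4$ by shrinking $\beta$ (permitted since the theorem only requires $\beta$ to be some small constant, and this constant may be chosen after $C_0$). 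A union bound then gives $\Pr_{Z_0}[T_{\total}\le K]\ge 5/8$, and multiplying by the $1/2$ from the first phase yields $\Pr_\pi[T_{\total}\le M_1 n^2\log n + C_0 n^2]\ge 5/16$. Picking $M = M_1 + C_0$ makes $Mn^2\log n$ dominate $M_1 n^2\log n + C_0 n^2$ for all $n\ge e$ (the finitely many small $n$ being absorbed into $\mu$), so the claim holds with, e.g., $\mu = 11/16$.

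The main obstacle is the tension between the two lemmas: Lemma~\ref{lemma:markov} wants $K\gg n^2$, while Lemma~\ref{lemma:diff process} forces $pK = O(1)$, hence $K = O(n^2)$ under $p\le\beta/n^2$. The plan reconciles them by choosing $K = \Theta(n^2)$ and shrinking $\beta$ in terms of $C_0$, so that both probabilities become constants simultaneously; the extra $\log n$ factor in the target $t_0$ is absorbed entirely by the first (extremes-finding) phase, and no additional work on the second phase is needed.
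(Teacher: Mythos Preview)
Your proposal is correct and follows essentially the same route the paper sketches: the paper only writes ``We can finally deduce the last result'' and relies on the reader to combine the $O(n^2\log n)$ bound on $T_{\ex}$ from Section~5.1 with Lemmas~\ref{lemma:markov} and~\ref{lemma:diff process}, which is exactly what you do via Markov's inequality, the strong Markov property at $T_{\ex}$, and a union bound. Your handling of the tension between the two lemmas---taking $K=\Theta(n^2)$ and then shrinking $\beta$ in terms of $C_0$ so that both $n^2/K$ and $pK$ are small constants---is the right way to close the argument, and your observation that the $\log n$ factor in $t_0$ is consumed entirely by the first phase matches the paper's decomposition.
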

This lemma states that the probability for the process, starting in any position, to end in time $t_0=\calO(n^2\log(n))$ time is bounded by a constant. Therefore, the average number of phases of length $t_0$ is constant, hence the final result.

\section{Lower Bound on the SEMO With Reevaluation}
In this section, we will show that the previous $\tfrac{\beta}{n^2}$ bound for $p$ for the SEMO with reevaluation to run in $O(n^2\log(n))$ is nearly tight. Namely, if $p=\omega(\tfrac{\log(n)}{n^2})$ and for some constant $\lambda\in(0,1)$ we have $p\le\tfrac{\lambda}{n}$, then the runtime is super-polynomial.

Here, we will use the notation from the previous section. Our main ingredient for the proof will be the simplified drift theorem~\cite{OlivetoW11,OlivetoW12} used on the random sequence $(L_t)_{t\in\N}$, where $L_t=|f(\calP_t)|$.

In the whole proof, we will assume that $p=\omega(\tfrac{\log(n)}{n^2})$ and that there exists a constant $\lambda\in]0,1[$ such that $p\le\tfrac{\lambda}{n}$.

\begin{lemma}
    \label{lemma:backward drift}
    We denote $a_n = \max(n+1-\beta pn^2, \tfrac{3n}{4})$, with $\beta > 0$ to be determined. There exists a constant $\delta>0$ such that 
    $ 
        E[L_t-L_{t+1}\mid L_t, L_t \ge a_n] \ge \delta np.
    $
\end{lemma}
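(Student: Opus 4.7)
The plan is to analyze $L_t - L_{t+1}$ by decomposing it as (number of true values in $f(\calP_t)$ lost in $f(\calP_{t+1})$) minus (number of new true values contributed by the offspring $x_t'$, which is always $0$ or $1$), then to lower-bound the expected loss by $\Omega(np)$ via a collision argument inside $\elim$ and to upper-bound the expected gain by $O(\beta pn)$ using the hypothesis $L_t \ge a_n$. Choosing $\beta$ small enough will then give the announced $\delta np$ drift.

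The first step is a structural observation about $\elim$ specialized to \oneminmax. Since any two distinct objective vectors $(k, n-k)$ and $(k', n-k')$ are Pareto-incomparable, the test $v_x \prec v$ in line~7 of Algorithm~\ref{alg:elim} never holds, so $\calP_{t+1} = \elim(\calP_t', \tilde g)$ retains exactly one element per distinct noisy value $\tilde f$ appearing in $\calP_t'$; among elements sharing a noisy value, the one processed last in $\elim$'s random order survives. A direct consequence is $|\{x \in \calP_t : f(x) = k\}| \le 3$ for every $k$, because the corresponding noisy values must lie in $\{k-1, k, k+1\}$ and be pairwise distinct within $\calP_t$.

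For the loss term, I would fix $k \in f(\calP_t)$ and isolate the following sufficient sub-event for the loss of value $k$: some $x_k \in \calP_t$ with $f(x_k) = k$ keeps its evaluation $\tilde f(x_k) = k$ (probability at least $1-p$), a neighbor $y \in \calP_t$ with $f(y) \in \{k-1, k+1\}$ is noised onto the value $k$ (probability at least $p \cdot \min((k+1)/n, (n-k+1)/n) \ge p/2$ whenever such a neighbor exists), $y$ is processed after $x_k$ in $\elim$ (probability $1/2$), and no other element of $\calP_t'$ lands in the same noisy-value bucket or has true value $k$ (a high-probability event using $|\{x : f(x) = k\}| \le 3$ and $p \le \lambda/n$). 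The independence of the reevaluations across elements of $\calP_t'$ makes the main factors independent. Since $L_t \ge a_n \ge 3n/4$ forces $M := n+1-L_t \le n/4+1$, at least $L_t - M \ge n/2 - 1 = \Omega(n)$ of the values $k$ in $f(\calP_t)$ admit such a neighbor, and summing the per-value loss probabilities yields $E[\text{values lost} \mid L_t, L_t \ge a_n] \ge c_1 np$ for an absolute constant $c_1 > 0$.

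For the gain term, a new value enters $f(\calP_{t+1})$ only if $f(x_t') \notin f(\calP_t)$; since one-bit mutation shifts $f$ by exactly one, this requires $f(x_t) \in \{k-1, k+1\}$ for some missing $k$. Using $|\{x \in \calP_t : f(x) = j\}| \le 3$ and summing $\Pr[f(x_t') = k]$ over the $M$ missing values gives $E[\text{values gained}] \le 6M/|\calP_t| \le 8M/n$. The hypothesis $L_t \ge n+1-\beta pn^2$ yields $M \le \beta pn^2$ and hence $E[\text{values gained}] \le 8\beta pn$. Choosing $\beta$ small enough that $8\beta \le c_1/2$ finally gives the desired drift $E[L_t - L_{t+1} \mid L_t, L_t \ge a_n] \ge (c_1/2) np =: \delta np$. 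The main obstacle will be keeping the per-value sufficient events clean enough to remain nearly independent and still of probability $\Omega(p)$: neighboring values can share a representative, multiplicities up to $3$ can complicate the collision structure, and the offspring $x_t'$ may itself land in $\{k-1, k, k+1\}$. I would address this by either restricting the sum to a maximal matching of consecutive pairs so the sufficient events become genuinely disjoint, or by doing a uniform per-value analysis and absorbing the complications into a worse (but still positive) constant $c_1$.
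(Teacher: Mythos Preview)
Your decomposition into a loss term (true values disappearing from $f(\calP_{t+1})$) and a gain term (at most one new value from the offspring), together with the plan to show the loss is $\Omega(np)$ and the gain is $O(\beta pn)$, is exactly the argument the paper sketches (``each wrong evaluation will cause on average a deletion; this effect will dominate the gains from the mutation''). The gain bound via $M\le\beta pn^2$ and $|\calP_t|\ge L_t\ge 3n/4$ is clean and correct.

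Two small points need repair in the loss bound. First, the inequality $\min((k+1)/n,(n-k+1)/n)\ge 1/2$ is false for $k$ near $0$ or $n$ (it drops to order $1/n$). The easy fix is to restrict the sum to $k\in[n/4,3n/4]$, where the minimum is at least $1/4$; since at most $M\le n/4+1$ values are missing, this still leaves $\Omega(n)$ usable values. Alternatively, when both neighbours are present you may pick the favourable one and get $\max((k+1)/n,(n-k+1)/n)\ge 1/2$. Second, the condition ``no other element of $\calP_t'$ has true value $k$'' is not a high-probability random event but a structural property of $\calP_t'$; you need to argue separately that it holds for $\Omega(n)$ values. This follows from the counting observation $|\calP_t'|\le n+2$ and $|f(\calP_t')|\ge L_t\ge 3n/4$, so at most $n/4+2$ values can have multiplicity $\ge 2$. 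With these two adjustments your sufficient event cleanly forces the unique representative $x_k$ to be overwritten in $\elim$, and linearity of expectation gives the $c_1np$ lower bound without any need for disjointness or a matching.
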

This lemma indicates the presence of a negative drift of $L_t$. Indeed, when that variable is close to $n$, each wrong evaluation will cause on average a deletion. This effect will dominate the gains from the mutation if the noise rate is big enough.
We then show that
    \[ 
        \Pr[|L_t-L_{t+1}|\ge j\mid L_t]=O_{j\rightarrow +\infty}(\lambda^j).
    \]
These are the two hypotheses for the Simplified Drift Theorem, as stated in \cite{OlivetoW12}. The theorem implies the existence of a constant $B>0$ such that $\Pr[T_{\total}\le 2^{Bpn^2}]\le 2^{-\Omega(pn^2)}$. As $pn^2=\omega(\log(n))$, this yields the final result.
        
\section{Conclusion}

We conducted the first mathematical runtime analysis, and to the best of our knowledge also the first analysis at all, aimed at understanding how MOEAs without particular adjustments behave in the presence of noise. Taking the SEMO algorithm and the \oneminmax benchmark with one-bit noise as an example simple enough to admit a mathematical analysis, we show that noise rates up to $p = O(1/n)$ can be tolerated without suffering from an asymptotic increase of the expected runtime (whereas runtime we regard the time it takes until for the first time the population of the algorithm witnesses the full Pareto front). This robustness is very close to the $p=O(\log(\log(n))/n)$ noise tolerance known for the much simpler single-objective \onemax problem and suggests that MOEAs, despite the more complex problem of finding the whole Pareto front, can stand similar noise levels as single-objective EAs.

Interestingly, our result only holds when solutions are evaluated only once and the algorithm then continues with this, possibly faulty, objective value. If solutions are reevaluated whenever they are compared to other solutions (here in each iteration), then any noise rate $p = \omega(\log(n)/n^2)$ leads to a super-polynomial runtime. This is in drastic contrast to single-objective evolutionary computation, where the general recommendation (backed up by an analysis of an ant colony optimizer) is to reevaluate each solution when its fitness is relevant so that a single faulty fitness evaluation cannot harm the future optimization process for a long time. 

From the proofs of our results we are optimistic that our general finding that MOEAs without particular adjustments are robust to a moderate amount of noise is not restricted to the particular setting we analyzed, but holds for a broad range of algorithms and benchmarks. A reasonable next step to support this belief would be to regard the global SEMO algorithm (using bit-wise mutation instead of one-bit flips) and a bit-wise noise model (where the fitness observed for $x$ is the fitness of a search point obtained from $x$ by flipping each bit independently with probability $q=p/n$). With this scaling, we would expect very similar results to hold as shown in this work. We note that this appears to be a problem very close to ours, but past research has shown that both global mutation in MOEAs and bit-wise noise can render mathematical analyses much harder (different from the SEMO, there is no good lower bound for the global SEMO on the \lotz benchmark~\cite{DoerrKV13}; the first mathematical runtime analysis in the presence of noise~\cite{Droste04}, namely one-bit noise, was extended to bit-wise noise only more than ten years later~\cite{GiessenK16} and with much deeper methods).

A second interesting direction to extend this work would be by regarding other problems. The \oneminmax problem has the particular properties that all solutions lie on the Pareto front and the one-bit flips are sufficient to explore the whole front. Hence analyses for the \cocz or \lotz benchmarks having non-Pareto optimal solutions \cite{LaumannsTZ04} or the \ojzj benchmark having larger gaps in the Pareto front \cite{DoerrZ21aaai} would be very interesting. Understanding how existing analyses for combinatorial optimization problems such as \cite{Neumann07,CerfDHKW23} extend to the noisy setting could be a subsequent step.

A third interesting direction for future research would be to regard posterior noise models (where the noisy objective value of a search point is its true objective value modified in some stochastic way, e.g., by adding Gaussian noise). For single-objective evolutionary algorithms, often comparable results could be obtained with similar arguments~\cite{GiessenK16,Dang-NhuDDIN18}. Since posterior noise can lead to objective values that cannot be obtained as noiseless fitness values, we have some doubts that our results extend to posterior noise as well. In particular, we would speculate that for posterior noise, different from what our result suggests for prior noise, reevaluating search points is the better strategy as otherwise it is not clear how to remove a noisy objective value that cannot occur as noiseless one.

\section*{Acknowledgments}
This work was supported by a public grant as part of the Investissements d'avenir project, reference ANR-11-LABX-0056-LMH, LabEx LMH and a fellowship via the International Exchange Program of \'Ecole Polytechnique.






{\small
\bibliographystyle{named}
\bibliography{bibliography/ich_master.bib,bibliography/alles_ea_master.bib,bibliography/rest.bib}
}

\end{document}